\renewcommand{\PrintDOI}[1]{\href{http://dx.doi.org/\detokenize{#1}}{doi: \detokenize{#1}}%
	\IfEmptyBibField{pages}{, (to appear in print)}{}}
\theoremstyle{definition}
\newtheorem{theorem}{Theorem}[section]
\newtheorem{proposition}[theorem]{Proposition}
\theoremstyle{definition}
\newtheorem{definition}[theorem]{Definition}
\theoremstyle{remark}
\numberwithin{equation}{section}
\numberwithin{equation}{section}
\title{Topological Deep Learning: Classification Neural Networks}
\author{Mustafa Hajij}
\address{Santa Clara University,Santa Clara, USA}
\email{hajij@scu.edu}
\author{Kyle Istvan}
\email{kyleistvan@gmail.edu}
\begin{document}

\maketitle

\begin{abstract}
Topological deep learning is a formalism that is aimed at introducing topological language to deep learning for the purpose of utilizing the minimal mathematical structures to formalize problems that arise in a generic deep learning problem. This is the first of a sequence of articles with the purpose of introducing and studying this formalism. In this article, we define and study the classification problem in machine learning in a topological setting. Using this topological framework, we show when the classification problem is possible or not possible in the context of neural networks.  Finally, we show that for a given data, the architecture of a classification neural network must take into account the topology of this data in order to achieve a successful classification task.
\end{abstract}

\section{Introduction}
Recent years have witnessed increased interest in the role topology plays in machine learning and data science \cite{carlsson2009topology}. Topology is a natural tool that allows the formulation of many longstanding problems in these fields. For instance, \textit{persistent homology} \cite{edelsbrunner2010computational} has been overwhelmingly successful at finding solutions to a vast array of complex data problems \cite{attene2003shape, bajaj1997contour, boyell1963hybrid, carr2004simplifying, curto2017can, DabaghianMemoliFrank2012, giusti2016two, kweon1994extracting, LeeChungKang2011b, LeeChungKang2011, LeeKangChung2012, LeeKangChung2012b, lum2013extracting, nicolau2011topology, rosen2017using}.

On the other hand, the role that topology plays in deep learning is still mostly restricted to techniques that attempt to enhance machine learning models \cite{hofer2017deep,bruel2019topology,wangtopogan}. However, we believe that topology can and will play a central role in deep learning and AI in general. This is the first of a sequence of articles with the purpose of introducing \textit{topological deep learning}, a formalism that is aimed at introducing topological language to deep learning for the purpose of utilizing the minimal mathematical structures to  formalize problems that arise in a generic deep learning problem. 

In this article we define and study the classification problem in a topological setting. Using this topological machinery, we show when the classification problem is possible or not possible in the context of neural networks. Finally, we show how the architecture of a neural network cannot be chosen independently from the topology of the underlying data. To demonstrate these results, we provide an example dataset and show how it is acted upon by a neural net from this topological perspective. A more thorough treatment of the topic presented here is given in \cite{hajij2020topology}.

\section{Previous Work}

 The earliest hints, that we know of, related to our work appears in a blog by C. Olah \cite{olah2014neural}. Olah performed a number of topological experiments illustrating the importance of considering the topology of the underlying data when making a neural network. In \cite{naitzat2020topology} the activations of a binary classification neural network were considered as point clouds that the layer functions of the network are acting on. The topologies of these activations are then studied using homological tools such as persistent homology \cite{edelsbrunner2010computational}.

Alternatively, our work here can be regarded as part of the effort in the literature regarding the explainablity of deep learning \cite{hagras2018toward,selvaraju2017grad}. The authors Zeiler et. al. in \cite{zeiler2014visualizing} introduced a visualization technique that gives insight into the intermediate layers of convolutional neural networks. In \cite{yosinski2015understanding} also gives a way to visualize and interpret the a given convolutional network by looking at the activations.

\section{Background}
 A \textit{neural network}, or simply a \textit{network}, is a function $Net: \mathbb{R}^ {d_{in}} \longrightarrow \mathbb{R}^{d_{out}}$ defined by a composition of the form: 
\begin{equation}
\label{Net}
    Net:=f_{L} \circ \cdots \circ f_{1}
\end{equation}
where the functions $f_{i}$, $1 \leq i \leq L $ are called the \textit{layer functions}. A layer function $f_i:\mathbb{ R }^{n_i} \longrightarrow \mathbb{ R }^{m_i} $ is typically a continuous, piece-wise smooth function of the following form: $f_i(x)=\sigma(W_i(x)+b_i)$ where $W_i$ is  an $m_i\times n_i$ matrix, $b_i$ is a vector in $\mathbb{R}^{m_i} $, and $\sigma :\mathbb{R}\longrightarrow \mathbb{R} $ is an appropriately chosen nonlinear function that is applied coordinate-wise on an input vector $(z_1,\cdots,z_{m_i} ) $ to get a vector $(\sigma( z_1),\cdots,\sigma(z_{m_i}))$.







 
\section{Data In a Topological Setting}

The purpose of this section is define the notion of data using topological notions.

\subsection{Topological Data}
\label{TD}
Denote by $M^n$ to a manifold $M$ of dimension $n$. Let $D = M_1^{i_1} \cupdot M_2^{i_2} \cdots  \cupdot M_k^{i_k}$ be a disjoint union of $k$ compact manifolds. Let $h: D \to E $ be a continuous function on $D$. We refer to the pair $(D,h)$ as \textit{topological data} and refer to $E$ as the \textit{the ambient space} of the topological data, or simply the ambient space of the data.

A few remarks here must be made about the above definition. First note that the definition above is consistent with the statistical version. The space $E$, usually some Euclidean space, represents the ambient space of a probability distribution $\mu$ from which we sample the data. The support of $\mu$ is $\mathcal{D}:=h(D)$. The assumption that the data lives on a manifold-like structure is justified in the literature  \cite{fefferman2016testing,lei2020geometric}. \footnote{While we make this assumption here, it not strictly necessary anywhere in our proofs.}


\subsection{Topologically Labeled Data}

Let $(D,h)$ be topological data with $h:D\to \mathcal{D}\subset E $.
 Let $\mathcal{Y} = \{l_1, \cdots , l_d\}$ be a finite set. A \textit{topological labeling} on $\mathcal{D}$ is a closed subset $\mathcal{D}_L \subset \mathcal{D} $ 
 along with a
 surjective continuous function $g : \mathcal{D}_L \to \mathcal{Y}$ where $\mathcal{Y}$ is given the discrete topology. The triplet $(D,h,g)$ will be called \textit{topologically labeled data}.

Topologically labeled data is a topological object that corresponds to labeled data in the typical statistical setting for a supervised classification machine learning problem. 

 
 \section{The Topological Classification Problem}
 \label{clfers}
 With the above setting we now demonstrate how to realize the classification problem as a topological problem. In what follows we set $\mathcal{D}_k$ to denote $g^{-1}(l_k)$ for $l_k \in \mathcal{Y}$. 
\begin{definition}
\label{def1}
Let $(D,h,g)$ be topologically labeled data with, $h: D \to \mathcal{D} \subset \mathbb{R}^n$ and $g:\mathcal{D}_L\to \mathcal{Y} $ where $|\mathcal{Y}|=d$. A \textit{topological classifier} on $(D,h,g)$ is a continuous function $f:\mathbb{R}^n \to \mathbb{R}^k$.  We say that $f$ \textit{separates} the topologically labeled data $(D,h,g)$ if we can find $d$ disjoint embedded $k$-dimensional discs $A_1, \cdots , A_{d}$ in $\mathbb{R}^k$ such that $f(\mathcal{D}_d)\subset A_d$.
\end{definition}

In general, a topologically labeled data can be knotted, linked and entangled together in a non-trivial manner by the embedding $h$, and the existence of a function $f$ that separates this data is not immediate. The preceding description is an topological rewording of the classification problem typically given in a statistical setting. Indeed, a successful classifier tries to \textit{separate} the labeled data by mapping the raw input data into another space where this data can be separated easily according to the given class. 


The function $f$ is the learning function that we try to compute, in practice. The first question one could ask in this context is one of existence: given topologically labeled data $(D,h,g)$ when can we find a function $f$ that separates this data? We answer this question next.  

\subsection{Topological Classifiers and Separability of Topologically Labeled Data}

We start with the binary classification problem, namely when $|\mathcal{Y}|=2$. We have the following proposition:

\begin{proposition}
\label{one}
Let $(D,h,g)$ by a topologically labeled data with $h: D \longrightarrow \mathcal{D} \subset  \mathbb{R}^{d_{in}}$ and $g: \mathcal{D}_L  \to \{l_1, l_2\}$. Then there exists a topological classifier $f: \mathbb{R}^{d_{in}} \to \mathbb{R}$ that separates $(D,h,g)$.
\end{proposition}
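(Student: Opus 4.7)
The plan is to reduce the claim to a standard separation result for disjoint closed sets in a metric space, so the proof is essentially an application of Urysohn's lemma (or, concretely, of the distance function construction).

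First, I want to extract the right topological fact about the two label classes. Because $\mathcal{Y}=\{l_1,l_2\}$ carries the discrete topology, each singleton $\{l_i\}$ is both open and closed in $\mathcal{Y}$, so the continuity of $g:\mathcal{D}_L\to\mathcal{Y}$ forces $\mathcal{D}_1=g^{-1}(l_1)$ and $\mathcal{D}_2=g^{-1}(l_2)$ to be disjoint closed subsets of $\mathcal{D}_L$. Next I would show that they are in fact closed (and even compact) as subsets of $\mathbb{R}^{d_{in}}$: $D$ is a finite disjoint union of compact manifolds, hence compact, so $\mathcal{D}=h(D)$ is compact; $\mathcal{D}_L$ is closed in $\mathcal{D}$ by assumption, hence compact; and $\mathcal{D}_1,\mathcal{D}_2$ are closed subsets of $\mathcal{D}_L$, hence compact. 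In particular they are disjoint closed subsets of the metric space $\mathbb{R}^{d_{in}}$.

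The second step is to produce the classifier. I would exhibit $f$ explicitly via the distance function
\[
f(x)=\frac{d(x,\mathcal{D}_1)}{d(x,\mathcal{D}_1)+d(x,\mathcal{D}_2)},
\]
which is a continuous map $\mathbb{R}^{d_{in}}\to\mathbb{R}$ (the denominator never vanishes since $\mathcal{D}_1\cap\mathcal{D}_2=\emptyset$ and both sets are closed), with $f(\mathcal{D}_1)=\{0\}$ and $f(\mathcal{D}_2)=\{1\}$. Alternatively one could simply invoke Urysohn's lemma in the normal space $\mathbb{R}^{d_{in}}$.

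Finally, to match Definition \ref{def1}, I choose any $\varepsilon\in(0,1/2)$ and set $A_1=[-\varepsilon,\varepsilon]$ and $A_2=[1-\varepsilon,1+\varepsilon]$. These are two disjoint embedded $1$-dimensional discs in $\mathbb{R}$, and by construction $f(\mathcal{D}_1)=\{0\}\subset A_1$ and $f(\mathcal{D}_2)=\{1\}\subset A_2$, so $f$ separates $(D,h,g)$. There is no serious obstacle in this proof; the only thing worth being careful about is the first step, namely verifying that the two label classes are indeed disjoint closed sets of the ambient Euclidean space, because this is where the hypotheses on $h$, the compactness of $D$, and the discreteness of $\mathcal{Y}$ are all used at once. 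The binary case is easy precisely because the target is $\mathbb{R}$; the subtlety absent here (separating more than two classes into disjoint discs of a fixed Euclidean range) is what one expects to drive the more interesting multi-class version.
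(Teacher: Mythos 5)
Your proof is correct, and the underlying idea --- reduce the statement to separating two disjoint closed label classes by a continuous $[0,1]$-valued function --- is exactly the one the paper uses. The implementation differs in one small but genuine way: the paper applies Urysohn's lemma on $\mathcal{D}$ to obtain $f^*:\mathcal{D}\to[0,1]$ with $f^*(\mathcal{D}_1)=0$ and $f^*(\mathcal{D}_2)=1$, and then invokes the Tietze extension theorem (using that $\mathcal{D}$ is closed in $\mathbb{R}^{d_{in}}$) to extend $f^*$ to all of $\mathbb{R}^{d_{in}}$; your quotient of distance functions is defined globally from the outset, so the extension step disappears entirely. Your write-up also makes explicit two points the paper's proof leaves implicit: first, that $\mathcal{D}_1$ and $\mathcal{D}_2$ are closed in the ambient $\mathbb{R}^{d_{in}}$ (via compactness of $D$, hence of $\mathcal{D}=h(D)$, hence of the closed subset $\mathcal{D}_L$ and of the two preimages), which is precisely what either argument needs to get off the ground; and second, the actual choice of disjoint embedded $1$-discs $A_1,A_2$ required by Definition \ref{def1}, which the paper never exhibits. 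The trade-off between the two routes is minor: the Urysohn--Tietze argument transfers verbatim to any normal ambient space, while your distance-function formula is self-contained and elementary in the metric setting; both are essentially two-line proofs, and your closing remark correctly identifies that the real content only appears in the multi-class generalization.
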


\begin{proof}
By definition, label function  $g: \mathcal{D}_L \longrightarrow \{l_1, l_2\}$ induces a partition on $\mathcal{D}_L$ into two disjoint closed sets $\mathcal{D}_1:=g^{-1}(l_1)$ and $\mathcal{D}_2:=g^{-1}(l_2)$. By Urysohn's lemma there exists a function $f^*:\mathcal{D} \longrightarrow [0,1]$ such that $f^*(\mathcal{D}_1)=0$ and $f^*(\mathcal{D}_2)=1$. Since $\mathcal{D}$ is closed in $\mathbb{R}^{d_{in}}$ then by Tietze extension theorem there exists an extension of $f^*$ to a continuous function $f : \mathbb{R}^{d_{in}} \to \mathbb{R} $ such that $f^*(\mathcal{D})=f(\mathcal{D})$.  In particular, $f(\mathcal{D}_1)=0$ and $f(\mathcal{D}_2)=1$. Hence the function $f$ separates $(D,h,g)$.
\end{proof}

Proposition \ref{one} can be easily generalized to obtain functions that separate $(D,h,g)$ in any Euclidean space $\mathbb{R}^k$. Namely, for any $k\geq 1$ there exists a continuous map $F: \mathbb{R}^{d_{in}} \longrightarrow \mathbb{R}^{k}$ that separates $(D,h,g)$. This can be done by defining $F = (f_1 , f_2)$ where $f_1:\mathbb{R}^{d_{in}}\longrightarrow [0,1]$ is the continuous function guaranteed by Urysohn's Lemma and $f_2:\mathbb{R}^{d_{in}}\longrightarrow \mathbb{R}^{k-1}$ is an arbitrary continuous function. This function $F$ clearly separates  $(X,h,g)$. We record this fact in the following proposition.
\begin{proposition}
\label{second}
Let $(D,h,g)$ by a topologically labeled data with  $h: D \to \mathcal{D} \subset \mathbb{R}^{d_{in}}$ and $g: \mathcal{D}_L \to \{l_1, l_2\}$. Then for any $k\geq 1$ there exists a continuous map $f: \mathbb{R}^{d_{in}} \to \mathbb{R}^{k}$ that separates $(D,h,g)$.
\end{proposition}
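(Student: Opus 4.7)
The plan is to bootstrap from Proposition \ref{one}: its output, a continuous scalar function separating the two classes by the values $0$ and $1$, already contains all of the necessary separation, and the remaining $k-1$ coordinates can be filled in with anything continuous. Concretely, let $f_1:\mathbb{R}^{d_{in}} \to \mathbb{R}$ be the function guaranteed by Proposition \ref{one}, so that $f_1(\mathcal{D}_1)=\{0\}$ and $f_1(\mathcal{D}_2)=\{1\}$, and let $f_2:\mathbb{R}^{d_{in}} \to \mathbb{R}^{k-1}$ be any continuous map (the constant zero map suffices). Define
\[
f := (f_1, f_2) : \mathbb{R}^{d_{in}} \longrightarrow \mathbb{R}^{k},
\]
which is continuous as a pairing of continuous maps.

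To verify separation, I would exhibit explicit disjoint embedded $k$-discs $A_1, A_2 \subset \mathbb{R}^k$ with $f(\mathcal{D}_i) \subset A_i$. First note that $\mathcal{D}_i := g^{-1}(l_i)$ is compact: $D$ is compact as a finite disjoint union of compact manifolds, so $\mathcal{D} = h(D)$ is compact; $\mathcal{D}_L$ is closed in $\mathcal{D}$; and $\{l_i\}$ is clopen in the discrete space $\mathcal{Y}$, so $\mathcal{D}_i$ is clopen in $\mathcal{D}_L$ and therefore compact. Consequently $f(\mathcal{D}_i)$ is a compact subset of the hyperplane $\{i-1\}\times\mathbb{R}^{k-1}$. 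Choose $R>0$ large enough that the projections to $\mathbb{R}^{k-1}$ of both $f(\mathcal{D}_1)$ and $f(\mathcal{D}_2)$ lie in the closed ball $\overline{B}^{k-1}_R$, and set
\[
A_1 := [-\tfrac{1}{4}, \tfrac{1}{4}] \times \overline{B}^{k-1}_R, \qquad A_2 := [\tfrac{3}{4}, \tfrac{5}{4}] \times \overline{B}^{k-1}_R.
\]
Each $A_i$ is a product of closed balls of complementary dimensions, hence an embedded $k$-disc in $\mathbb{R}^k$; the first-coordinate intervals are disjoint, giving $A_1 \cap A_2 = \emptyset$; and $f(\mathcal{D}_i) \subset A_i$ by construction.

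There is no serious obstacle here beyond tracking definitions. The only point that requires a moment's thought is that the image $f(\mathcal{D}_i)$ naturally lands in a $(k-1)$-dimensional hyperplane, so one must thicken that hyperplane slightly in the first coordinate to obtain genuine $k$-dimensional discs (rather than lower-dimensional sets); compactness of $\mathcal{D}_i$ is what allows this thickening to be a bounded product, and hence a topological $k$-disc. Everything else is an immediate consequence of Proposition \ref{one} and standard properties of continuous maps on compact sets, and the argument is uniform in $k \geq 1$.
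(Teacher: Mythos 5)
Your proposal is correct and follows essentially the same route as the paper, which likewise defines $f=(f_1,f_2)$ with $f_1$ the scalar separator from Proposition \ref{one} and $f_2$ an arbitrary continuous map into $\mathbb{R}^{k-1}$. The only difference is that you explicitly construct the disjoint embedded $k$-discs (using compactness of the $\mathcal{D}_i$ to thicken the hyperplane slices), a verification the paper simply asserts as clear.
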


Proposition \ref{second} can be generalized to the case when the set $\mathcal{Y}$ has an arbitrary finite size. This can be done by because Urysohn’s Lemma remains valid when we start with $n$ disjoint sets instead of $2$. The following theorem, which generalizes \ref{second}, asserts the existence of a topological classifier $f$ that separates any given topologically labeled data.

\begin{theorem}
\label{generalization TLD}
Let $(D,h,g)$ be topologically labeled data with  $h: D \to  \mathcal{D} \subset R^{d_{in}}$ and $g: \mathcal{D}_L \to \mathcal{Y}$.  Then there exists a continuous map $f: \mathbb{R}^{d_{in}} \to \mathbb{R}^k$ that separates $(D,h,g)$ for any integer $ k \geq 1$.
\end{theorem}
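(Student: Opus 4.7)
The plan is to mimic the construction used for Propositions \ref{one} and \ref{second}, but replace the two-set Urysohn/Tietze step with its finite analogue. Writing $\mathcal{Y}=\{l_1,\ldots,l_d\}$ and $\mathcal{D}_i=g^{-1}(l_i)$, I would first observe that the sets $\mathcal{D}_1,\ldots,\mathcal{D}_d$ are pairwise disjoint closed subsets of $\mathbb{R}^{d_{in}}$: each $\{l_i\}$ is open and closed in the discrete space $\mathcal{Y}$, so each $\mathcal{D}_i=g^{-1}(l_i)$ is closed in $\mathcal{D}_L$; since $D$ is a finite disjoint union of compact manifolds, $\mathcal{D}=h(D)$ is compact and hence closed in $\mathbb{R}^{d_{in}}$, and $\mathcal{D}_L$ is closed in $\mathcal{D}$ by definition, so each $\mathcal{D}_i$ is closed in $\mathbb{R}^{d_{in}}$.

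Next, I would build the "coordinate" function on the labeled part. Define $f^*:\mathcal{D}_L\to\mathbb{R}$ by $f^*(x)=i$ whenever $x\in\mathcal{D}_i$. Because $\{\mathcal{D}_1,\ldots,\mathcal{D}_d\}$ is a finite closed cover of $\mathcal{D}_L$ on each piece of which $f^*$ is constant, the pasting lemma gives continuity. Applying the Tietze extension theorem to the closed subset $\mathcal{D}_L\subset\mathbb{R}^{d_{in}}$ then produces a continuous extension $f_1:\mathbb{R}^{d_{in}}\to\mathbb{R}$ with $f_1(\mathcal{D}_i)=\{i\}$ for every $i$. This already settles the case $k=1$: the intervals $A_i=[i-\tfrac13,\,i+\tfrac13]$ are disjoint embedded $1$-discs with $f_1(\mathcal{D}_i)\subset A_i$.

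For $k\ge 2$, I would put $f=(f_1,f_2):\mathbb{R}^{d_{in}}\to\mathbb{R}\times\mathbb{R}^{k-1}=\mathbb{R}^k$, where $f_2:\mathbb{R}^{d_{in}}\to\mathbb{R}^{k-1}$ is any continuous map. Since $\mathcal{D}$ is compact, each $f_2(\mathcal{D}_i)$ is a compact subset of $\mathbb{R}^{k-1}$, so we can choose a single closed ball $B\subset\mathbb{R}^{k-1}$ that contains $f_2(\mathcal{D}_i)$ for all $i$. Taking $A_i=[i-\tfrac13,\,i+\tfrac13]\times B$ gives a family of pairwise disjoint closed rectangular boxes in $\mathbb{R}^k$, each homeomorphic to a closed $k$-disc, with $f(\mathcal{D}_i)\subset A_i$. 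Hence $f$ separates $(D,h,g)$.

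The proof has essentially no hard step; the only care needed is in the first paragraph, making sure that each $\mathcal{D}_i$ really is closed in the ambient $\mathbb{R}^{d_{in}}$ so that Tietze applies, and in the last step, verifying that the product boxes are legitimate embedded $k$-discs and can be arranged to be disjoint (which is immediate from disjointness of the intervals in the first coordinate). I would also note in passing that the same argument produces $f$ with $f(\mathcal{D}_i)$ a single point in $\mathbb{R}^k$, which is a stronger form of separation than the definition requires.
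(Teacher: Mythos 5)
Your proof is correct and follows essentially the same route the paper intends: the paper only sketches this theorem by remarking that Urysohn's Lemma extends to $n$ disjoint closed sets and then reusing the $F=(f_1,f_2)$ construction of Proposition \ref{second}, and your argument is exactly that, with the multi-set Urysohn step replaced by the equivalent direct construction (pasting lemma on the finite closed cover plus Tietze). You also supply details the paper leaves implicit, such as verifying that each $\mathcal{D}_i$ is closed in $\mathbb{R}^{d_{in}}$ and exhibiting explicit disjoint embedded $k$-discs.
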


\section{Neural Networks as Topological Classifiers}

Let $(D,h,g)$ by a topologically labeled data with, $h: D \to \mathcal{D} \subset R^{d_{in}}$ and $g: \mathcal{D}_L  \to \mathcal{Y}=\{l_1, \cdots l_n \}$. Can we find a neural network defined on $R^{d_{in}}$ that separates the data $(D,h,g)$ ? We start by framing the softmax classification networks using  topological terminologies.

Typical, classification neural networks have a special layer function at the end where one uses the \textit{softmax activation function} \footnote{There are other types of classification neural networks but this is beyond the scope of our discussion here}. Denote by $\Delta_n$ the $n^{th}$ simplex as the convex hull of the vertices $\{v_0,\cdots ,v_{n} \}$ where $v_i=(0,...,1,...,0)\in\mathbb{R}^{n+1}$ with the lone $1$ in the $(i+1)^{th}$ coordinate. 

  The \textit{softmax function} on $n$ vertices $softmax:R^{n} \longrightarrow Int (\Delta_{n-1}) \subset R^{n}$, 
is defined by the composition $S \circ Exp  $ where $Exp : \mathbb{R}^n\to (\mathbb{R}^+)^n  $ is defined by : $ Exp ( x_1,\cdots,x_n  ) = ( \exp( x_1), \cdots,\exp( x_n)  ) $, and $S :\mathbb{R}^n \to \Delta_{n-1} $ is defined by :$$ S ( x_1,\cdots,x_n  ) = ( x_1/\sum_{i=1}^n x_i, \cdots, x_n /\sum_{i=1}^n x_i ) $$. 

 A network $Net$ is said to be a \textit{softmax classification neural network } with $n$ labels if the final layer of $Net$ is softmax function with $n$ vertices. Usually $n$ is the number of labels in the classification problem.  Each vertex $v_i$ in $\Delta_{n-1}$ corresponds to precisely one label $l_{i+1} \in \mathcal{Y} $ for $0 \leq i \leq n-1 $.

For an input $x\in \mathcal{D}$ the point $Net(x)$ is an element of $\Delta_{n-1}$. By definition, the point $x$ is assigned to the label $l_{i+1}$ by the neural network if and only if $Net(x) \in Int(VC( v_i ))$ where $VC(C)$ denotes the Voronoi cell of the set $C$ and $Int(A)$ denotes the interior of a set $A$. This immediately yields the following theorem.

\begin{theorem}
\label{tm}
Let $(D,h,g)$ by a topologically labeled data with, $h: D \to \mathcal{D} \subset R^{d_{in}}$ and $g: \mathcal{D}_L \subset \mathbb{R}^{d_{in}} \to \{l_1, \cdots l_n \}$. A softmax classification neural network $Net : \mathbb{R}^{d_{in}} \to Int(\Delta_{n-1}) $ separates $(D,h,g)$ if and only if $Net(\mathcal{D}_{i+1}) \subset Int( VC(v_i)) $ for $0\leq i \leq n-1$.
\end{theorem}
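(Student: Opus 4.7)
The plan is to read the theorem as a direct translation of the softmax classification rule (already spelled out in the paragraph preceding the statement) into the disc-separation language of Definition \ref{def1}. The key observation is that the Voronoi cells $VC(v_0), \ldots, VC(v_{n-1})$ of the vertices of the standard simplex $\Delta_{n-1}\subset \mathbb{R}^n$ are closed convex polytopes that partition $\Delta_{n-1}$ and have pairwise disjoint interiors; each $Int(VC(v_i))$ is an open convex subset of $\Delta_{n-1}$, hence topologically an open disc. These cell interiors are the natural candidates for the separating regions $A_{i+1}$.

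For the $(\Leftarrow)$ direction I would start from the hypothesis $Net(\mathcal{D}_{i+1}) \subset Int(VC(v_i))$ for each $i$ and build the separating discs required by Definition \ref{def1}. Because each $Int(VC(v_i))$ lies inside the $(n-1)$-dimensional slice $\Delta_{n-1}$ of $\mathbb{R}^n$, I would thicken it to a small open $n$-disc $A_{i+1}\subset \mathbb{R}^n$ containing $Net(\mathcal{D}_{i+1})$ and still avoiding the other Voronoi cells; this is possible because distinct compact subsets of distinct Voronoi cell interiors are a positive Euclidean distance apart. The resulting $A_{i+1}$ are then pairwise disjoint embedded $n$-discs in $\mathbb{R}^n$ with $Net(\mathcal{D}_{i+1})\subset A_{i+1}$, which is exactly the condition in Definition \ref{def1}.

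For the $(\Rightarrow)$ direction I would appeal to the classification rule itself: by the preceding paragraph, a softmax network assigns the label $l_{i+1}$ to a point $x$ exactly when $Net(x)\in Int(VC(v_i))$. Under the natural interpretation of \emph{separates} in the softmax context --- namely that every class is mapped into the region corresponding to its correct label --- the hypothesis that $Net$ separates $(D,h,g)$ immediately forces $Net(\mathcal{D}_{i+1})\subset Int(VC(v_i))$ for each $i$.

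The main obstacle, and the only subtlety I anticipate, is reconciling the dimensional mismatch and the mild ambiguity in the word \emph{separates}. Definition \ref{def1} asks for $n$-dimensional discs in $\mathbb{R}^n$, while the Voronoi cells of $\Delta_{n-1}$ are only $(n-1)$-dimensional inside that affine slice; the thickening argument in the sufficiency step handles this. For the necessity direction, Definition \ref{def1} by itself does not distinguish the Voronoi-cell discs from arbitrary disjoint discs, so the proof really leans on the softmax-specific reading of \emph{separates} indicated by the surrounding exposition --- equivalently, on the fact that the canonical separating discs produced by a softmax network must be precisely the Voronoi cells of the simplex vertices.
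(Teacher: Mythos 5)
Your argument coincides with the paper's own treatment: the paper offers no proof beyond the remark that the theorem ``immediately'' follows from the classification rule $x \mapsto l_{i+1} \iff Net(x)\in Int(VC(v_i))$, and your two directions simply make that remark explicit. Your sufficiency step --- enclosing the compact images $Net(\mathcal{D}_{i+1})$ in disjoint embedded $n$-discs inside the pairwise disjoint open convex sets $Int(VC(v_i))$ --- is sound and in fact more careful than anything the paper supplies. The caveat you raise about necessity is well placed and is a defect of the statement rather than of your proof: under a literal reading of Definition \ref{def1} the ``only if'' direction fails, since a network could map every class into a tiny disc contained in a single Voronoi cell and still \emph{separate} the data in the sense of that definition, so the equivalence holds only under the softmax-specific reading of \emph{separates} (correct assignment of each class to its own label region) that the paper implicitly adopts and that you correctly identify.
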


Finally, to answer the question about the ability of a neural network to separate a topologically labeled data, we combine the result we obtained from Theorem \ref{generalization TLD} with the universality of neural networks \cite{cybenko1989approximations,hanin2017approximating,lu2017expressive} \footnote{The universal approximation theorem is available in many flavors : one may fix the depth of the network and vary the width or the other way around.}. The universality of neural networks essentially states that for any continuous function $f$ we can find a network that approximates it to an arbitrary precision\footnote{The closeness between functions is with respect to an appropriate functional norm. See \cite{cybenko1989approximations,lu2017expressive} for more details. }. Hence we conclude that any topologically labeled data can effectively be separated by a neural network. 

\section{Shape of Data and Neural Networks}

We end our discussion by briefly showing how the shape of input data is essential when deciding on the architecture of the neural network. Theorem \ref{2222}  that if we are not careful about the choice of the first layer function of a network then we can always find a topologically labeled data that cannot be separated by this network.

\begin{theorem}
\label{2222}
Let $Net$ be neural  network  of  the  form :
$Net=Net_1 \circ f_1$
with $f_1:\mathbb{R}^n\longrightarrow \mathbb{R}^k$ such that $f_1(x)= \sigma( W(x)+b)$ and $k<n$ and $Net_1 : \mathbb{R}^k\longrightarrow \mathbb{R}^d $ is an arbitrary net. Then there exists a topologically labeled data $(D,h,g)$ with $h:D \to \mathcal{D} \subset \mathbb{R}^n$  and $g: \mathcal{D}_L \subset \mathcal{D}\to \mathbb{R}^d $ that is not separable by $Net$.
\end{theorem}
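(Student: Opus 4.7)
The plan is to exploit the fact that the linear part of $f_1$ must have a nontrivial kernel. Since $W : \mathbb{R}^n \to \mathbb{R}^k$ with $k < n$, the rank--nullity theorem gives $\dim \ker W \geq n - k \geq 1$, so there exists a nonzero vector $v \in \ker W$. Any two input points differing by such a $v$ are mapped to the same image by $f_1$, and hence by all of $Net$, regardless of how $Net_1$ is chosen. So the idea is to manufacture a two-label dataset that places points with different labels in the same fiber of $W$.

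Concretely, I would take $D = \{p_1\} \cupdot \{p_2\}$, a disjoint union of two compact $0$-dimensional manifolds, and define $h : D \to \mathbb{R}^n$ by $h(p_1) = 0$ and $h(p_2) = v$. Setting $\mathcal{D} = \mathcal{D}_L = \{0, v\}$, which is closed in $\mathbb{R}^n$ because it is finite, and picking $g(0) = l_1$, $g(v) = l_2$ with $\mathcal{Y} = \{l_1, l_2\}$, yields a valid topologically labeled datum in the sense of Section \ref{TD}. Continuity of $g$ is automatic because $\mathcal{D}_L$ carries the discrete subspace topology.

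The verification is then a direct calculation. Because $v \in \ker W$, we have $W(0) + b = b = W(v) + b$, so $f_1(0) = \sigma(b) = f_1(v)$ and therefore $Net(0) = Net_1(f_1(0)) = Net_1(f_1(v)) = Net(v)$. If $Net$ were to separate $(D,h,g)$, Definition \ref{def1} would supply disjoint embedded discs $A_1, A_2 \subset \mathbb{R}^d$ with $Net(0) \in A_1$ and $Net(v) \in A_2$, contradicting $A_1 \cap A_2 = \emptyset$. Hence no such separation exists.

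The only potential obstacle is essentially cosmetic: making sure the construction respects any implicit non-degeneracy the authors want for the manifolds $M_i$. If one prefers positive-dimensional pieces, the same argument works with $M_1$ and $M_2$ chosen to be small embedded spheres (of any dimension up to $n-k-1$) sitting entirely inside the affine kernel fibers $W^{-1}(0)$ and $W^{-1}(Wv)$ through $0$ and $v$, respectively, and labelled $l_1$ and $l_2$. Each sphere is then collapsed by $f_1$ to a single point in $\mathbb{R}^k$, and the two points coincide (both equal $\sigma(b)$), so the same contradiction with the existence of disjoint discs $A_1, A_2$ rules out separability.
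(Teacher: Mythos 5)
Your proof is correct and uses the same key idea as the paper: since $W:\mathbb{R}^n\to\mathbb{R}^k$ with $k<n$ has a nontrivial kernel, one can place two differently labeled points in the same fiber of $f_1$, so $Net$ identifies them and no disjoint discs can separate their images. The only difference is cosmetic: the paper realizes the dataset as a ball $\{\|x\|\le 2\}$ labeled by an inner ball and an outer shell and picks the colliding points $p_1,p_2$ on the line through the origin in the kernel direction, whereas you use a minimal two-point (or two-sphere) dataset; both are valid instances of topologically labeled data.
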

\textbf{Proof.}
Let $D = \mathcal{D}= \{x\in\mathbb{R}^n, ||x||\leq 2 \}$.  Let  $\mathcal{D}_L =\mathcal{D}_1 \cupdot \mathcal{D}_2 $ where $\mathcal{D}_1=\{x\in\mathbb{R}^n, ||x||\leq 0.9 \}$ and $\mathcal{D}_2=\{x\in\mathbb{R}^n, 1 \leq ||x||\leq 2 \}$.  Choose $g:\mathcal{D}_L \longrightarrow \{l_1,l_2\}$ such that $g(\mathcal{D}_1)=l_1$ and $g(\mathcal{D}_2)=l_2$. Let $f_1$ be a function as defined in the Theorem. The matrix $W : \mathbb{R}^n \longrightarrow \mathbb{R}^k $ where $k < n$ has a nontrivial kernel. Hence, there is a non-trivial vector $v \in \mathbb{R}^n$ such that $W(v)=0$. Choose a point $p_1 \in \mathcal{D}_1 $ and a point $p_2 \in \mathcal{D}_2$ on the line that passes through the origin and has the direction of $v$. We obtain $W(p_1)=W(p_2)=0$. In other words, $f_1(p_1)=f_1(p_2)$. Hence $Net(p_1)=Net(p_2)$ and hence $Net(\mathcal{D}_1) \cap Net(\mathcal{D}_2) \neq \emptyset $ and so we cannot find two embedded disks that separate the sets $Net(\mathcal{D}_1)$, $Net(\mathcal{D}_2)$.

Note that in Theorem \ref{2222} the statement is independent of the depth of the neural  network. This is also related to the work \cite{johnson2018deep} which shows that skinny neural networks are not universal approximators. This is also related to the work in \cite{nguyen2018neural} where is was shown that a network has to be wide enough in order to successfully classify the input data.


To demonstrate the role that the topology of data may play in regard to the architecture of a neural network we end our discussion by considering the following example. Let $Net$ be a neural network given by the composition $Net=f_6\circ f_5\circ f_4\circ f_3\circ f_2\circ f_1$. For $1\leq i \leq 5$ maps are given by $f_i := Relu(W_i(x)+b_i)$ such that $W_1:\mathbb{R}^2\to \mathbb{R}^5 $, $W_2:\mathbb{R}^5\to \mathbb{R}^5 $, $W_3:\mathbb{R}^5\to \mathbb{R}^2 $
and $W_j : \mathbb{R}^2\to \mathbb{R}^2 $ for $ 4 \leq j \leq 5 $. Finally, the function, $f_6 = softmax(W_6(x)+b_6)$  where  $W_6:\mathbb{R}^2\to \mathbb{R}^2 $. 

We train this network on the annulus dataset given in the top left Figure in \ref{annulus}. In Figure \ref{annulus} we also trace the activations as demonstrated in Figure \ref{annulus}. In the Figure we visualize the activations in higher dimension by projecting them using Isomap \cite{tenenbaum2000global} to $\mathbb{R}^3$. Our choice of this algorithm as a dimensionality reduction algorithm is driven by the fact that the dataset we work with here is essentially a manifold; as such, projecting the space to a lower dimension with the Isomap algorithm should preserve most of the topological and geometric structure of the this space.  

\begin{figure}[h]
  \centering
   {\includegraphics[width=1\textwidth]{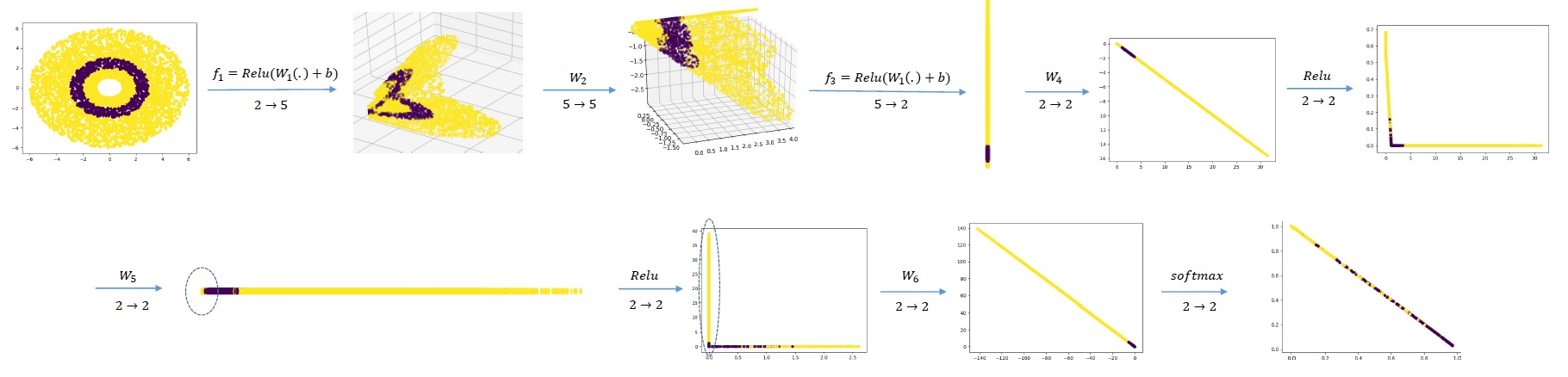}
    \caption{The topological operations performed by a network on data sampled from the annulus and colored by two lables.  }
  \label{annulus}}
\end{figure}
Inspecting the activations in Figure \ref{annulus} we make the following observation: 

\begin{enumerate}
    \item A neural network can collapse the topological space either using the nonlinear $Relu$ or by utilizing the linear part of a given layer function. This is the case with the map $f_3: \mathbb{R}^5\longrightarrow \mathbb{R}^2$. While the linear component is a projection onto $\mathbb{R}^2$, the network "chose" to project the space into $1-$ manifold since the second dimension is not needed for the final classification.
    \item  Note that the yellow components are separated by the purple one, and in order to map both of these parts to the same part of the space, the net has to glue these two parts together. Indeed, the neural network quotients parts of the space as it sees it necessary. This is visible in $W_5$, which acts as a projection, and again $W_6$. 
\end{enumerate}

\bibliographystyle{abbrv}
\bibliography{refs_2}

\end{document}